\documentclass{article}

\usepackage{microtype}
\usepackage{graphicx}
\usepackage{subcaption}
\usepackage{booktabs} 
\usepackage{comment}

\usepackage{hyperref}

\usepackage[accepted]{icml2026}

\usepackage{amsmath}
\DeclareMathOperator*{\argmin}{argmin}

\usepackage{amssymb}
\usepackage{mathtools}
\usepackage{amsthm}
\usepackage{tcolorbox}

\usepackage[capitalize]{cleveref}
\crefname{section}{Sec.}{Secs.}
\Crefname{section}{Section}{Sections}
\Crefname{table}{Table}{Tables}
\crefname{table}{Tab.}{Tabs.}
\theoremstyle{plain}
\newtheorem{theorem}{Theorem}[section]
\newtheorem{proposition}[theorem]{Proposition}

\newtheorem{corollary}[theorem]{Corollary}
\theoremstyle{definition}

\theoremstyle{remark}

\usepackage{xurl}

\newif\ifcomments

\commentsfalse

\ifcomments
  \newcommand{\colornote}[3]{{\color{#1}\bf{#2: #3}\normalfont}}
\else
  \newcommand{\colornote}[3]{}
\fi

\newcommand {\gptrm}[1]{}

\newcommand{\Geometricdist}{\mathrm{Geometric}}
\newcommand{\Multinomial}{\mathrm{Multinomial}}

\usepackage{custom_formats}

\usepackage[textsize=tiny]{todonotes}

\icmltitlerunning{Position: Stop Reactively Patching Your Model Every Time and Start Proactive Test-Driven AI Development}

\begin{document}

\twocolumn[
  \icmltitle{Position: Stop Reactively Patching Your Model Every Time and \\Start Proactive Test-Driven AI Development}



  \icmlsetsymbol{equal}{*}
  
  \begin{icmlauthorlist}
    \icmlauthor{Nadine Chang}{equal,yyy}
    \icmlauthor{Maying Shen}{equal,yyy}
    \icmlauthor{Jialiang Wang}{yyy}
    \icmlauthor{Rafid Mahmood}{yyy,comp}
    \icmlauthor{Jose M. Alvarez}{yyy} 
  \end{icmlauthorlist}

  \icmlaffiliation{yyy}{NVIDIA}
  \icmlaffiliation{comp}{University of Ottawa}

  \icmlcorrespondingauthor{Nadine Chang}{nadinec@nvidia.com}
  \icmlcorrespondingauthor{Maying Shen}{mshen@nvidia.com}

  \icmlkeywords{Data-centric AI, Model evaluation}

  \vskip 0.3in
]



\printAffiliationsAndNotice{}  

\begin{abstract}

  Many modern AI systems are designed to operate under diverse, open-ended, use-cases. To help generalize deployed systems, many deployed-system maintenance pipelines use a reactive AI flywheel that observes emerging feedback from user behavior (errors) and patches the model accordingly. However, when used as the primary maintenance mechanism, these flywheels often ignore the broader context of these errors within the system's objectives, failing to preempt potential future edge cases, which leads to more unnecessary flywheel iterations. Also, it is statistically increasingly difficult to collect remaining errors due to the long-tail nature of open-world use-cases~\cite{boneh1997coupon}. This position paper argues that a \emph{proactive test-driven flywheel} is required to address reactive flywheel's limitations and to approach a generalizable system. We advocate for creating a ``test space" to technically map feedback data to task objectives, evolving the flywheel from reactive to proactive. We augment our position by mathematically proving a proactive one achieves better long-term scaling with fewer iterations than the reactive flywheel.

\end{abstract}

\begin{figure*}[t]
    \centering
    \centering
    \includegraphics[width=0.8\linewidth]{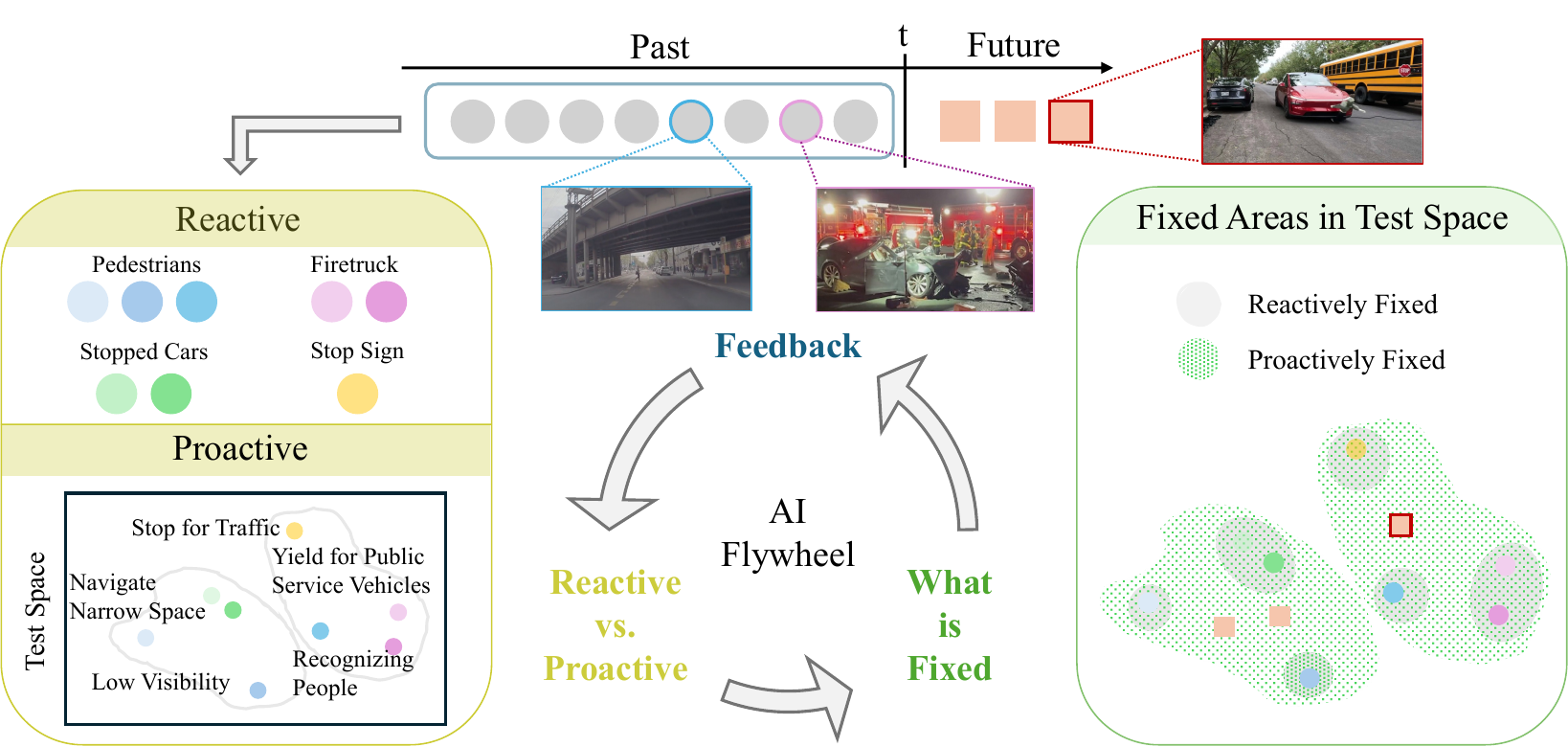}
    \caption{
    Overview of reactive and proactive AI flywheel. In the reactive paradigm, feedback is triaged and specific errors are patched (e.g. `pedestrian' and `firetruck'), failing to generalize beyond specific errors. In contrast, the proactive paradigm maps feedback to a test space of required task conditions and addresses the entire task condition (e.g. `yield for public service vehicles' and `recognizing people'). This proactivity supports improved generalization and future error prevention. For visual clarity, the test space is illustrated as a 2D space, and each feedback is associated with a single task condition. Because the proactive flywheel structurally organizes feedback differently, the same feedback reactively triaged into one group may fall into different regions of the test space.  
    }
    \label{fig:teaser}
\end{figure*}

\section{Introduction}

    
Recent machine learning progress has spurred several commercial AI systems---from language models to robotics and autonomous driving (AD)---with the ambitious goal of open-ended real-world operation under an ever-expanding set of use cases. These AI systems need extraordinary generalization capabilities to be able to adapt to unknown situations comprised of new user contexts and interactions. We observe this need especially in physical AI (e.g., robotics and AD systems), where the real-world test distribution of interactions is vast~\cite{waymo_cat,tesla_dummy}. From the emergence of ImageNet to recent GPT models, the conventional approach to developing general-purpose AI systems has emphasized large, diverse training datasets as a proven method for achieving generalizability~\cite{deng2009imagenet, achiam2023gpt}. Now, foundation models (FMs) accelerate the process of training a generalized AI system because systems can be easily built on top of FMs, which are trained on diverse, internet-sized data~\cite{rankin_2025}. Concretely, faster development enables faster system deployment. 

Once deployed, any test feedback is an invaluable signal to update the system. Now the question is, how do we effectively leverage the signals and fix the system? A common and publicly documented maintenance mode for deployed AI systems is a ``flywheel" loop - a continuous loop that improves the system by leveraging system feedback, ranging from minor edge cases to catastrophic failures (e.g. fast, tight turns to crashes in AD). For brevity, we refer to this feedback spectrum as generic ``errors". Because the flywheel reacts to real test feedback as they arise, we refer to this paradigm as a ``reactive test-driven" flywheel (RF). In this paradigm, feedback data is logged and triaged, usually with the help of some human intervention. A common data-centric practice is to then retrieve similar data to feedback data from a pre-existing data lake or even collect de novo in order to re-train and validate the model\cite{tesla_data_engine}, while a model-centric practice trains directly on the feedback data~\cite{yao2021rlps}. RF remains operationally attractive due to its ability to produce visible short-run progress and alignment with the incident-response workflow of software maintenance \citep{sillito2020failures}.

A reactive flywheel faces three major challenges. First, by considering only similar data to errors or training on errors themselves, a purely reactive flywheel focuses on patching the specific observed errors~\cite{ilharco2022patching}, the opposite goal of generalizability. Consequently, such purely reactive loops remain defensive and fail to prevent future vulnerabilities. For example, after a 2023 fatal autonomous vehicle (AV) crash involving a parked firetruck with active lights, the lack of reported firetruck crashes suggests the specific scenario was likely patched~\cite{tesla_firetruck}. However, a 2025 potentially fatal crash involving the same AV system and a parked school bus with lights revealed that the root cause---public service vehicles stopped in traffic with pedestrians nearby---remained unaddressed~\cite{tesla_dummy}. Patching the firetruck incident failed to generalize to the functionally identical school bus scenario, as shown in~\cref{fig:teaser}. Secondly, RFs are inefficient. Unable to preempt future errors, RFs lead to more iterations and backlog that strain fixed resources. Lastly, reactive flywheels assume that they will eventually encounter all edge cases. However, as more edge cases are found, it is significantly harder to encounter the remaining cases. This is known as the coupon collector's problem~\cite{boneh1997coupon} and can be especially dangerous in physical AI, where waiting for extremely rare and potentially unsafe cases to emerge can present unacceptable safety risks, as observed in \citet{koopman_2024}.

To overcome these challenges, we advocate that \textbf{to achieve generalizable AI systems, reactive test-driven patches are structurally suboptimal, and there is a need for goal-oriented, proactive test-driven flywheel.} Revisiting the automated driving (AD) firetruck and school bus scenarios reveals a common root cause in AD goals: the failure to stop for public service vehicles parked in traffic near pedestrians. National Highway Traffic Safety Administration (NHTSA)~\cite{nhtsa} guidebook for AD identifies detecting these vehicles, predicting nearby pedestrians due to association, and responding to these unusual conditions as core requirements. These are also covered in the Department of Motor Vehicles (DMV) handbook~\cite{dmv}. If the initial firetruck error were mapped to the various required driving task conditions, fixing them would have prevented the subsequent school bus incident. We illustrate this example reactive and proactive comparison in ~\cref{fig:teaser}. Addressing RF's first limitation and introducing a prevention advantage, a focus on task conditions shifts the flywheel from defensive patching to proactive refinement.

Consequently, we argue that 
\textbf{coverage over a ``test space" is needed to evolve a reactive test-driven flywheel to a proactive test-driven flywheel}. 
Inspired by ``train space"—an $n$-dimensional mapping of training data used for data diversity selection in conventional dataset curation~\cite{shen2025sse,diao2025climb,slyman2024fairdedup,sener2017active}—
for a given task, we define the ``test space" as an $n$-dimensional space encompassing vectorized representations of task conditions without necessitating system data. Operationally, the proactive objective is not merely to name this space, but to improve coverage over its important regions. A concrete atlas-style implementation can view this space as an atlas of task conditions: the atlas defines which conditions can be covered, while a frequency-weighted atlas specifies how strongly each condition should guide updates. 
In~\cref{fig:teaser}, we visualize the test space's efficacy on the AD example. By mapping system errors into this space, a proactive flywheel can address entire task conditions rather than isolated errors. This setup also encourages automation by bypassing human intervention usually needed for feedback triaging. To augment our position, we mathematically prove even a naive design of our test space proactive flywheel overcomes RF's second limitation by showing significantly better long-term scaling with fewer flywheel iterations and backlog. Furthermore, because proactive flywheel does not rely on collecting all edge cases to fix potential errors, it lessens the safety consequences of the coupon collector problem, RF's last limitation. Finally, we discuss open challenges to developing a test space-driven proactive flywheel to guide future research.

\section{Reactive Flywheels in a Nutshell}

\vspace{-1mm}
Because incident-driven and observed-failure-driven update loops are common in deployed ML systems and adjacent research, we discuss first why the reactive AI flywheel remains attractive to the research and industry community~\cite{tesla_data_engine,tang_liu_zhao_gong_2020,yao_dou_xu_wen_2021}. We refer to Appendix \ref{sec:app_related} for an expanded related research literature driving these motivations.

\textbf{Errors are finite and will all eventually emerge.}
%
Reactive error correction implicitly assumes that the universe of possible use-cases, and corresponding errors, is limited. This seems reasonable in closed-set machine learning (ML), which features a finite, enumerable set of tasks \citep{geng2020recent}. 
Reactive iterative strategies are commonplace in ML research, reinforced by concepts from learning curves \citep{viering2021shape}, hard negative mining \citep{shrivastava2016training}, and active learning \citep{cohn1996active}, which argue that ML models improve over time by addressing discovered weaknesses. 
This lends the intuition that although the long-tail of edge cases is vast, we are chipping away at it and, crucially, this process will converge fast enough to be practical. The assumption seems reasonable given examples, such as large language models' rapid performance improvement on benchmarks until the benchmark is nearly solved \citep{epoch2025whybenchmarkingishard}.

\vspace{-1mm}
\textbf{Backlogs of Known Errors Are Temporary.} Reactive maintenance also assumes that any backlog of unfixed errors is a temporary phenomenon, one that engineering teams will whittle down over time. In early deployment, a backlog of issues may accumulate as new error modes pour in, but practitioners typically believe this backlog will shrink as the model matures. This is analogous to the ``bug backlog'' in software projects, which often peaks right after launch and clears after a few update cycles~\cite{mockus2002two}.     
The ML community carries over this intuition: they expect that after an initial surge of discoveries, the model will reach an equilibrium where new errors are rare and mostly minor, allowing the outstanding issues to be resolved~\cite{sculley2015hidden,shrivastava2016training}.

\section{Comparing Reactive and Proactive}
\label{sec:sec3}

Modern AI systems operating in the real world are exposed to an innumerable range of scenarios, spanning countless user contexts and edge cases. However, current reactive flywheels operate under the assumption that these scenarios are finite. To isolate and compare the scaling behavior of reactive and proactive paradigms under a single setting, we introduce a stylized flywheel that favors a reactive setting by following its implicit assumption that errors are finite. \textbf{We leave all proofs (of theorems, etc.) in Appendix.}

\subsection{Model setup}
Consider an ML model deployed by a developer. 
There exists a set of $M$ undiscovered usage scenarios, which the model was not sufficiently trained for initially and would fail on during deployment. These scenarios may correspond to long-tail rare events or potential new usages, which were not highlighted during development. 
For example in AV, a rare usage scenario may be to ``stop for public service vehicles parked in traffic near pedestrians." 
In practice, while high-traffic deployments encounter repeated incidents, we assume that each $M$ scenario is distinct for analytical convenience. 
Under the test space design, these $M$ scenarios are characterized by $K$ factors. To further simplify the analysis, we assume that each scenario is associated with a single factor. The factors partition the scenarios evenly, yielding a group of $M/K$ scenarios per factor.
These assumptions are made solely for analytical clarity and do not affect the qualitative analysis and comparison between reactive and proactive paradigms.
While we define a finite $M$ and $K$, our analysis will focus on the asymptotics of these parameters. We also note that the human workload scales with incident volume, not only with the number of distinct scenarios $M$. The analysis therefore underestimates human attention consumption.

At time $t = 0$, the model is deployed to users. Each time step $t$ reflects a round of an AI flywheel, where the model is used, feedback is given, and the model is updated to address these feedback. 
\emph{The developer's objective is to discover and update the model to correct all $M$ scenarios within as few flywheel iterations as possible.}
In each time step $t$, let $i \in [M]$ be a random scenario drawn uniformly from $M$ scenarios, let $\set{K}(i) \subset [M]$ be the group this scenario belongs to, and let $\set{U}_t$ be the set of scenarios that the model is unable to handle at time $t$, initialized to $\set{U}_0 = [M]$. If $i \in \set{U}_t$, then user interaction results in a model error. The developer logs this event and attempts to update the model to resolve the error. If the developer is successful, then $i$ is removed from $\set{U}_{t+1}$ for the next time step.

Each flywheel iteration incurs costs from incident analysis, data acquisition, retraining, and redeployment. 
Given a fixed resource budget, a finite number of errors is observed, and the developer can apply different policies for addressing errors. We note that, in practice, developers are resolving a large number of errors per iteration, especially for high-traffic products. For tractability, we model the one-failure setting, and our key results remain the same when factoring multiple errors.
Below, we define two general strategies, one that is reactive and one that is proactive:
\begin{enumerate}
    \item \textbf{Reactive Flywheel:} If $i_t \in \set{U}_t$, then analyze the logged error and update the model with additional data or new training paradigm. Let $p_R \in (0, 1]$ be the probability that the developer resolves the error. Then, 
    \begin{align*}
        \set{U}_{t+1} = \begin{cases}
            \set{U}_t \setminus \{i_t\}   & \text{w.p.~} p_R \\
            \set{U}_t                   & \text{w.p.~} 1-p_R
        \end{cases}
    \end{align*}
        
\item \textbf{Proactive Flywheel:} At each time step, if $i_t \in \set{U}_t$, then analyze the logged error, determine the corresponding scenario group that the error belongs to, and update the model to try to simultaneously resolve all usage scenarios in the group, i.e., proactively addressing unobserved scenarios that share similar underlying causes. Let $p_P \in (0, 1]$ be the probability that the developer resolves all errors in that factor group. Then, 
    \begin{align*}
        \set{U}_{t+1} = \begin{cases}
            \set{U}_t \setminus \set{K}(i_t)   & \text{w.p.~} p_P \\
            \set{U}_t                   & \text{w.p.~} 1-p_P
        \end{cases}
    \end{align*}
\end{enumerate}
To be conservative in favor of RF, we assume $p_P < p_R$, i.e., proactive is more difficult than reactive. 
Intuitively, proactive correction requires attempting to address potential future errors in broad factor groups, whereas reactive correction focuses specifically on patching a single scenario. 
Thus, when proactive policy succeeds with a factor-level fix, it removes a group of related scenarios from the long tail rather than only the observed instance.

This model allows us to analyze the consequences of different flywheel strategies. We will first explore the expected number of flywheel iterations required to identify and correct all $M$ usage scenarios. We will then explore the backlog---i.e., the accumulation of logged errors that developers were unable to resolve and thus de-prioritized in favor of new errors.

We note that alternative models of the flywheel may consider how reactive exposure may eventually enable compositional generalization. For instance, after observing enough rare combinations, the ML model may infer a broader rule that covers unseen recombinations. 
Our analysis can also capture this possibility by treating such structure as a reduction from scenario-level errors to a smaller number of latent factors. 
The remaining bottlenecks become waiting to encounter the right rare combinations reactively or to identify the shared factors proactively.

\subsection{Reactive paradigm needs more flywheel iterations to correct errors
}

A scenario $i$ is revealed with probability $1/M$ in each flywheel iteration. 
Under reactive flywheel, the probability of fixing a specific scenario is $p_R/M$.
Under proactive flywheel, scenario $i$ is removed whenever any scenario $j$ in $\set{K}(i)$ is exposed, and the factor group-level attempt successfully addresses $j$. If $|\set{K}(i)| = M/K$, the effective per-iteration removal probability of a specific scenario $i$ is $p_P/K$.
Given these primitives, we can estimate the expected number of iterations required to correct all errors under both policies. 

\begin{figure}[t]
    \centering
    \centering
    \vspace{2mm}
    \includegraphics[width=0.9\linewidth]{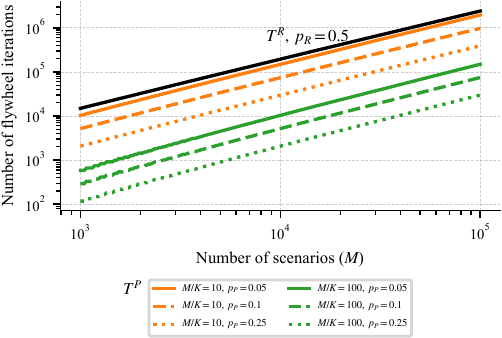}
    \caption{
    The expected number of flywheel iterations required to correct all scenarios under reactive and proactive flywheel (i.e., Thm \ref{thm:basic_strategy_expected_time} and \ref{thm:cluster_strategy_expected_time}). 
    Reactive iterations can grow to multiple orders of magnitude more as the number of scenarios $M$ or the likelihood of addressing an entire factor group $p_P$ increases. 
    }
    \label{fig:num_iterations}
\end{figure}

\begin{theorem}\label{thm:basic_strategy_expected_time}
    The expected number of flywheel iterations required to correct all errors under Reactive Flywheel is $\EX[T^R] := (M/p_R) \sum_{s=1}^M (1/s)$.
    %
\end{theorem}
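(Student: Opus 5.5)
This is a coupon-collector argument with thinning. The plan is to decompose $T^R$ into phases indexed by the number of remaining unresolved scenarios, and to note that each phase lasts a geometric number of iterations. The only modeling facts used are that $i_t$ is drawn uniformly from $[M]$ independently across iterations, and that a resolution attempt on an error succeeds with probability $p_R$ independently of everything else.

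\textbf{Phase decomposition.} For $s = M, M-1, \dots, 1$, define $\tau_s$ as the number of iterations during which $|\set{U}_t| = s$. Since $|\set{U}_t|$ starts at $M$, never increases, and decreases by exactly one whenever it changes, we have $T^R = \sum_{s=1}^M \tau_s$.

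\textbf{Distribution of each phase.} Condition on the history up to the first time $|\set{U}_t| = s$. In each later iteration the set $\set{U}_t$ of size $s$ stays fixed until a removal happens. A removal happens exactly when $i_t \in \set{U}_t$, which has probability $s/M$, and the fix then succeeds, which has probability $p_R$. So each iteration independently ends the phase with probability $q_s := s p_R / M$, regardless of which $s$ scenarios remain. Hence $\tau_s \sim \Geometricdist(q_s)$ on $\{1,2,\dots\}$, and $\EX[\tau_s] = M/(s p_R)$.

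\textbf{Summing.} By linearity of expectation, which holds without any independence between phases,
\begin{align*}
\EX[T^R] = \sum_{s=1}^M \frac{M}{s\,p_R} = \frac{M}{p_R}\sum_{s=1}^M \frac{1}{s}.
\end{align*}

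\textbf{Main obstacle.} The one delicate point is the strong Markov step: the conditional distribution of $\tau_s$ must depend only on $s$, not on which scenarios remain. This holds because sampling is uniform and the success probability is the same for every scenario. I would state it directly as a conditional-probability computation on the event $\{|\set{U}_t| = s\}$, so no formal Markov-chain machinery is needed.

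A second route is to write $T^R$ as a thinning of the standard coupon collector. Each iteration is a ``useful'' draw with probability $p_R$, giving a coupon collector with effective success rate $p_R$ per draw, and a Wald's identity argument then gives the same result. I would use the phase decomposition, since it is shorter.
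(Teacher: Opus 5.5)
Your proposal is correct and is essentially the paper's argument: split $T^R$ into phases according to how many scenarios remain unresolved, observe that the phase with $s$ unresolved scenarios lasts a $\Geometricdist(s p_R/M)$ number of iterations, and sum the means $M/(s p_R)$ by linearity. Indexing by the remaining count $s$ rather than the paper's resolved count $r$ also avoids the paper's index slip: its sum runs over $r = 0,\dots,M$ (and then $r=1,\dots,M$), producing a $1/(M-r)$ term at $r = M$, when the range should stop at $r = M-1$.
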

\begin{theorem}\label{thm:cluster_strategy_expected_time}
    The expected number of flywheel iterations required to correct all errors under Proactive Flywheel is $\EX[T^P] := (K/p_P) \sum_{s=1}^K (1/s)$.
    %
\end{theorem}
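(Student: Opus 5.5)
The plan is to reduce the proactive process to a coupon-collector chain over the $K$ factor groups and then sum the expected waiting times of its phases, just as for \cref{thm:basic_strategy_expected_time}. First I will check that groups are removed atomically. We start from $\set{U}_0=[M]$, and a successful update removes an entire group $\set{K}(i_t)$. By induction on $t$, $\set{U}_t$ is always a union of whole groups. So the state is fully described by the number $G_t\in\{0,\dots,K\}$ of unresolved groups, with $G_0=K$ and $T^P=\min\{t: G_t=0\}$.

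Next I compute the transition probabilities of $G_t$. The scenario $i_t$ is uniform on $[M]$, and each group has exactly $M/K$ scenarios. Hence, given $G_t=g$, the probability that $i_t\in\set{U}_t$ is $g(M/K)/M=g/K$. If $i_t\notin\set{U}_t$, nothing changes. If $i_t\in\set{U}_t$, the group $\set{K}(i_t)$ is currently unresolved, so a success, which has probability $p_P$ independently of the draw, lowers $G$ by exactly one. Thus $G_{t+1}=g-1$ with probability $p_P g/K$, and otherwise $G_{t+1}=g$. This transition depends only on $g$, so $(G_t)$ is a Markov chain.

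Finally I decompose $T^P=\sum_{g=1}^{K}\tau_g$, where $\tau_g$ is the number of iterations spent in state $g$. By the strong Markov property and the fact that the step probabilities are i.i.d. given the state, $\tau_g$ is $\Geometricdist(p_Pg/K)$ with mean $K/(p_P g)$. Linearity of expectation then gives
\[
\EX[T^P]=\sum_{g=1}^{K}\frac{K}{p_P\,g}=\frac{K}{p_P}\sum_{s=1}^{K}\frac1s,
\]
as claimed.

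The main obstacle is the first step, justifying the collapse to the group count. It relies on equal group sizes $M/K$ and on $\set{U}_t$ being a union of full groups. Without either condition, the hitting probability would depend on which groups remain and not just on how many, and the harmonic form would fail. Once this invariant is established, the remainder is the standard coupon-collector computation with success probability thinned by $p_P$. Setting $K=M$ and $p_P=p_R$ recovers \cref{thm:basic_strategy_expected_time}, which serves as a consistency check.
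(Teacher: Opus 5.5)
Your proof is correct and takes the same route as the paper: you reduce the process to a coupon collector over the $K$ groups and sum geometric waiting times with means $K/(p_P g)$. Your version is also cleaner than the paper's write-up. The paper indexes by a count $r$ of resolved errors and writes the per-step success probability as $(K-r)p_P/M$, whereas your explicit invariant (the unresolved set is always a union of whole groups) gives directly the rate $p_P g/K$ that the harmonic sum actually needs.
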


The theorems imply that completion time is dominated by the discovery of rare scenarios.
The expected time to observe all $M$ scenarios under uniform sampling is a Coupon-collector time \citep{boneh1997coupon}. 
Here, $\sum_{s=1}^M 1/s = \log M + \gamma + o(1) =: H_M$ is the $M$-th Harmonic number, where $\gamma \approx 0.577$ is the Euler-Mascheroni constant. 
Consequently, reactive and proactive flywheels scale as $\Theta((M/p_R) \log M)$ and $\Theta((K / p_P) \log K)$, respectively. The trend is shown in ~\cref{fig:num_iterations}, where we can find that reactive paradigm requires more flywheel iterations than proactive and grows in multiple orders of magnitude as the number of
scenarios $M$ or proactive success rate $p_P$ increases. Importantly, proactive approach can result in orders of magnitude \textit{fewer} flywheel iterations than the reactive approach, especially when $M/K$ is large, suggesting that many usage scenarios are contextually similar.
Even when the task is difficult and $M/K$ is small, suggesting that undiscovered scenarios are dissimilar and spread across different categories, the advantage of the proactive paradigm approaches reactive one.

Below, we formalize the exact conditions where proactive flywheel dominates reactive.

\begin{corollary}\label{cor:when_cluster_is_shorter}
    Proactive Flywheel incurs fewer iterations of the flywheel if  
    \begin{align*}
        \frac{p_P}{p_R} \geq \frac{K \sum_{s=1}^K 1/s}{M \sum_{s=1}^M 1/s}
    \end{align*}
\end{corollary}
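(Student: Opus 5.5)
The plan is to read the corollary directly off Theorems~\ref{thm:basic_strategy_expected_time} and~\ref{thm:cluster_strategy_expected_time}. Here ``fewer iterations'' means fewer \emph{expected} iterations, so the claim is equivalent to $\EX[T^P] \le \EX[T^R]$. Substituting the two closed forms, this becomes
\begin{align*}
    \frac{K}{p_P} \sum_{s=1}^K \frac{1}{s} \;\le\; \frac{M}{p_R} \sum_{s=1}^M \frac{1}{s}.
\end{align*}

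The second step is a positivity rearrangement. Since $p_P, p_R \in (0,1]$ and $M, K \ge 1$, every quantity involved is strictly positive. I will multiply both sides by $p_P p_R > 0$ to get $p_R K H_K \le p_P M H_M$, where $H_n := \sum_{s=1}^n 1/s$ as defined in the paper. Dividing by $p_R M H_M > 0$ then gives exactly the stated condition $p_P/p_R \ge K H_K/(M H_M)$. Each step is reversible, so the inequality is in fact necessary as well as sufficient. With equality both flywheels tie, so ``fewer'' should be read as ``no more''; a strict inequality in the hypothesis yields a strict improvement.

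There is essentially no obstacle, since all the probabilistic content sits in the two theorems. The only point needing care is the interpretation. The corollary concerns expected values, not pathwise domination of $T^P$ by $T^R$, and I will state this explicitly.

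I will also add a sanity remark. Under the model, $K \le M$ because each of the $K$ factors holds $M/K \ge 1$ scenarios, and $n \mapsto n H_n$ is increasing. So the right-hand side is at most $1$, and it is strictly below $1$ whenever $K < M$. This makes the condition compatible with the standing assumption $p_P < p_R$.
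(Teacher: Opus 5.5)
Your proposal is correct and is essentially the paper's argument: the paper also just substitutes the closed-form expected iteration counts from the two theorems and rearranges. Your extra care (positivity of all factors, treating the equality case as a tie so that ``fewer'' really means ``no more'', and checking that the right-hand side is at most $1$ so the condition is compatible with $p_P < p_R$) fills in details the paper's one-line proof leaves implicit.
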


Corollary \ref{cor:when_cluster_is_shorter} gives a condition under which proactive flywheel requires fewer iterations even when it is harder per iteration. Here, $p_R/p_P$ quantifies how much ``easier'' is the task of reactively correcting errors versus a proactive approach. If the number of potential future failures resolved by a proactive strategy, i.e., $M/K$, is sufficiently large with respect to $p_R/p_P$, then proactive correction guarantees fewer flywheel iterations, as shown in ~\cref{fig:num_iterations_trtp}. For example, even if reactive patches are two or three times more likely to succeed in a given pass, proactive correction can still require fewer total iterations whenever each successful proactive update covers a sufficiently large factor group. Conversely, when meaningful grouping is weak ($K \approx M$) or proactive correction is orders of magnitude harder, the scaling advantage shrinks toward the reactive baseline.
Note that these results are ultimately optimistic by assuming (i) uniform exposure and (ii) no catastrophic forgetting, which uniformly affect both policies.

\begin{figure}[t]
    \centering
    \centering
    \vspace{2mm}
    \includegraphics[width=\linewidth]{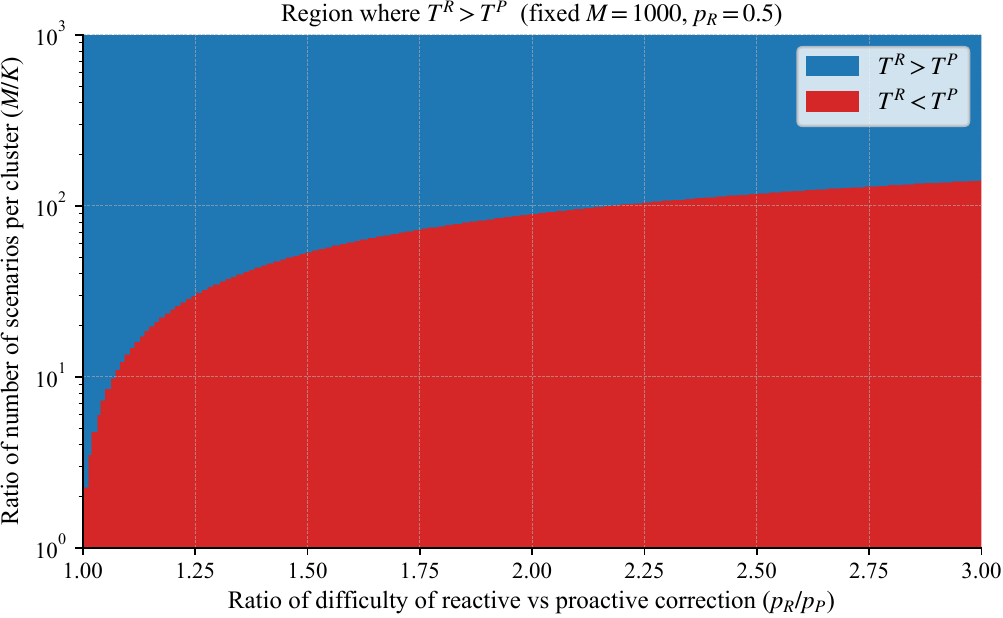}
    \caption{
    Visualizing where proactive flywheel outperforms reactive (in blue) given the difficulty of proactive versus reactive and the number of scenarios per group (i.e., Corollary \ref{cor:when_cluster_is_shorter}). 
    Even if proactive correction is twice as difficult to achieve per iteration, it is attractive if it can address sufficient scenarios. Note log y-axis.
    }
    \label{fig:num_iterations_trtp}
\end{figure}

Importantly, while more flywheel iteration translates directly into more operational cost, the flywheel cycle is not just about compute. It includes incident triage, reproduction, data acquisition, labeling, training, evaluation, and release review. These steps are largely human-limited, even with highly automated training infrastructure \citep{sculley2015hidden}. The relevant question is therefore not only ``can the model be improved,” but also ``can the organization keep up with the stream of feedback?”

The earlier results imply that a reactive pipeline spends most of its time chasing the long-tail. Thm. \ref{thm:basic_strategy_expected_time} shows that even under uniform exposure and no forgetting, reactive completion time is dominated by discovery of rare scenarios. This is the coupon-collector effect. Operationally, this means that the later stages of reactive correction are driven by increasingly infrequent errors that are harder to observe, reproduce, and validate. The work does not become simpler as the system improves. The work becomes more selective and more expensive per marginal improvement.

Finally, we note the importance of reducing the number of AI flywheel iterations due to auxiliary costs. In practice, more frequent retraining creates additional opportunities for regressions, evaluation debt, and coordination overhead. These effects increase the effective cost of a high-retrain regime and further amplify the gap between reactive and proactive flywheel.

\subsection{Backlogs persist longer and grow larger under the reactive paradigm}

When developers fail to address the identified error (i.e. with probability $1-p_R$ or $1-p_P$), the scenario becomes a part of the developer's backlog. 
In the next flywheel iteration, the developer must address the emerging scenario rather than previous ones. 
Over time, this will result in a growing backlog of scenarios under a fixed resource budget.

Let $\set{O}_t := \{ i \in [M] \;|\; \exists s \leq t, I_s = i, i \in \set{U}_{s-1} \}$ be the set of errors that have been observed up to some point $t$. Then, the number of errors in backlog is $B_t := | \set{O}_t \cap \set{U}_t|$. Below, we quantify the expected size of $B_t$ under the two policies.

\begin{figure}[t]
    \centering
    \centering
    \vspace{2mm}
    \includegraphics[width=\linewidth]{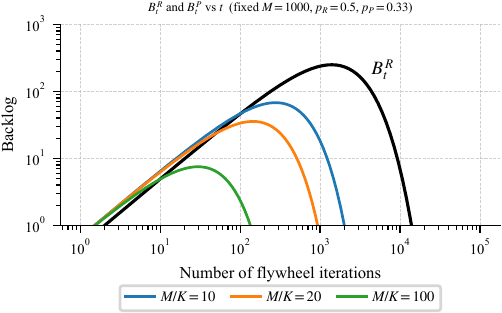}
    \vspace{0.5mm}
    \caption{
    Expected backlog after a certain number of flywheel iterations  
    (i.e., Thm. \ref{thm:backlog_under_basic_strategy} and \ref{thm:backlog_under_cluster_strategy}).
    Although proactive flywheel may incur slightly larger backlogs in the early stages, proactive backlog falls rapidly, and the maximum backlog is orders of magnitude fewer than that of reactive.
    }
    \label{fig:backlog}
\end{figure}

\begin{theorem}\label{thm:backlog_under_basic_strategy}
    Under Reactive Flywheel, the expected backlog at any time $t$ is
    \begin{align*}
        \EX[B_t^R] = M \left( \left(1-\frac{p_R}{M}\right)^t - \left( 1 - \frac{1}{M}\right)^t \right)
    \end{align*}
\end{theorem}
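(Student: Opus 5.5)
By linearity of expectation, $\EX[B_t^R] = \sum_{i\in[M]} \Pr[i \in \set{O}_t \cap \set{U}_t]$, and by the symmetry of uniform sampling every scenario contributes the same amount. So it suffices to fix one scenario $i$ and compute the probability that, after $t$ iterations, $i$ has been observed while unresolved at least once but is still unresolved. The plan is to write this as a difference of two events:
\begin{align*}
\Pr[i \in \set{O}_t \cap \set{U}_t] = \Pr[i \in \set{U}_t] - \Pr[i \in \set{U}_t,\ i \notin \set{O}_t].
\end{align*}

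First I compute $\Pr[i\in\set{U}_t]$. Under Reactive Flywheel, scenario $i$ leaves $\set{U}$ exactly when, in some iteration $s$ with $i\in\set{U}_{s-1}$, it is drawn (probability $1/M$) and the patch succeeds (independent, probability $p_R$). Conditional on $i$ still being unresolved, each iteration therefore removes $i$ with probability exactly $p_R/M$, independently of the past. Hence $\Pr[i \in \set{U}_t] = (1-p_R/M)^t$.

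Next, observe that $i\notin\set{O}_t$ means $i$ was never drawn while in $\set{U}$. Since $i$ can only leave $\set{U}$ by being drawn, this is the same as $i$ never being drawn in $t$ iterations, which has probability $(1-1/M)^t$. On this event $i$ remains in $\set{U}_t$ automatically, so $\Pr[i\in\set{U}_t,\ i\notin\set{O}_t] = (1-1/M)^t$. Subtracting and multiplying by $M$ gives the stated formula.

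The main obstacle is the independence claim in the first step. The draws $i_t$ are i.i.d.\ uniform, and the success coin is independent of the draw and of the history. Membership of the fixed scenario $i$ depends only on its own draws and coins, so the evolution of $i$'s status is a two-state chain, unaffected by what happens to other scenarios. I would state this explicitly, since it is what justifies the geometric form. I would also fix the indexing convention in the definition of $\set{O}_t$ (draw $I_s$ compared with $\set{U}_{s-1}$) so that there are no off-by-one issues between the two events.
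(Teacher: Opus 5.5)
Your proposal is correct and follows the paper's proof essentially step for step. You reduce by linearity and symmetry to one scenario, write $\Pr(\text{observed and unresolved}) = \Pr(\text{unresolved}) - \Pr(\text{never observed and unresolved})$, note that never being observed forces the scenario to stay unresolved, and plug in $(1-p_R/M)^t$ and $(1-1/M)^t$. Your explicit handling of the $\set{U}_{s-1}$ condition in $\set{O}_t$ and of the per-step independence of draw and patch success just makes rigorous what the paper asserts directly; note that the paper's success event $S_{i,s}$ should be an intersection, not the union it writes.
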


\begin{theorem}\label{thm:backlog_under_cluster_strategy}
    Under Proactive Flywheel, the expected backlog at any time $t$ is
    \begin{align*}
        \EX[B_t^P] = M \left( \left(1-\frac{p_P}{K}\right)^t - \left( 1 - \frac{p_P}{K} - \frac{1-p_P}{M}\right)^t \right)
    \end{align*}
\end{theorem}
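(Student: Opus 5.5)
By linearity of expectation I will write $\EX[B_t^P] = \sum_{i\in[M]} \Pr(i \in \set{O}_t \cap \set{U}_t)$ and compute the per-scenario probability, which does not depend on $i$ by symmetry. The key decomposition is
\begin{align*}
\Pr(i \in \set{O}_t \cap \set{U}_t) = \Pr(i \in \set{U}_t) - \Pr(i \in \set{U}_t,\ i \notin \set{O}_t),
\end{align*}
so it suffices to compute two survival probabilities. Each is a product over $t$ i.i.d.\ rounds, because the draws $I_s$ and the success coins are independent across rounds. This is the same structure used in the proof of \cref{thm:backlog_under_basic_strategy}, and I will mirror it.

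\emph{First term.} Under the Proactive Flywheel, scenario $i$ leaves $\set{U}$ exactly at the first round $s$ with $I_s \in \set{K}(i)$ and a successful group-level fix. That event has probability $(M/K)\cdot(1/M)\cdot p_P = p_P/K$. Before the group is removed every member of $\set{K}(i)$ is still in $\set{U}_{s-1}$, so a draw from the group always triggers an attempt. Hence $\Pr(i\in\set{U}_t) = (1-p_P/K)^t$.

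\emph{Second term.} The event $\{i\in \set{U}_t,\ i\notin\set{O}_t\}$ means that in every round $s\le t$ two things happen: (a) $I_s \neq i$, and (b) there is no successful removal of the group via some other $j\in\set{K}(i)\setminus\{i\}$. For (a), any draw of $i$ while $i\in\set{U}_{s-1}$ puts $i$ into $\set{O}$, whether or not the fix succeeds. The per-round complement has probability $1/M + p_P(M/K-1)/M$, giving the factor $1 - p_P/K - (1-p_P)/M$ per round.

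The step that needs care is checking that these per-round events really are independent and identically distributed across rounds, given that the state $\set{U}_s$ evolves. I handle this by noting that on the event where $i$'s group is still present and $i$ is unobserved, the conditional probability of the per-round event depends only on the fresh draw $I_s$ and the fresh coin. Conditioning step by step then yields the product.

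Subtracting the two terms and multiplying by $M$ gives the stated formula. As a sanity check, setting $K=M$ (so $p_P$ plays the role of $p_R$) recovers \cref{thm:backlog_under_basic_strategy}.
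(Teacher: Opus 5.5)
Your proof is correct, but it takes a different route from the paper's proof of this theorem. What you do is essentially the paper's argument for the reactive backlog (\cref{thm:backlog_under_basic_strategy}), extended to account for removal of $i$ through other members of its group.

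The paper instead works at the cluster level. By symmetry, the total expected backlog is $K$ times the expected backlog of one cluster. The paper treats the counts of rounds that miss the cluster, hit it and fail, or hit it and succeed as $\Multinomial(t, 1-1/K, (1-p_P)/K, p_P/K)$. It then conditions on the cluster having been hit $n$ times with no success. Given that, it computes the expected number of distinct scenarios among those $n$ uniform in-cluster draws as $(M/K)(1-(1-K/M)^n)$. Finally, it collapses the sum over $n$ by applying the binomial theorem twice.

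You work per scenario, writing $\Pr(i\in\set{O}_t\cap\set{U}_t)$ as a difference of two survival probabilities, each a product of identical per-round factors. Your per-round factor $1 - 1/M - p_P(M/K-1)/M = 1 - p_P/K - (1-p_P)/M$ is exactly the second base in the statement. Your justification is also the right one: on the survival event the whole group is still in $\set{U}_{s-1}$, so every in-group draw triggers an attempt, and each round's outcome depends only on the fresh draw and the fresh coin.

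Your version is shorter and avoids the multinomial bookkeeping and the occupancy computation. It also makes the $K=M$ reduction to the reactive formula transparent. The paper's version exposes more structure: the conditional expected backlog of a cluster given how many failed hits it has received, which is the within-cluster coupon-collector occupancy. That conditional picture is the natural starting point if you wanted more than the mean, such as the distribution of backlog per cluster. Per-scenario indicators alone only give the expectation.
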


\begin{proposition}\label{propn:backlog_dynamics}
    Assume $p_P/K > p_R/M$. Let
    \begin{align*}
        t_0 := \max\left( \frac{\log2}{\log\frac{1-p_R/M}{1-p_P/K}} \;,\; \frac{\log2}{\log\frac{1-p_R/M}{1-1/M} }   \right)
    \end{align*}
    Then, for any $t > t_0$, we have $\EX[B^R_t] \geq \EX[B^P_t]$.
\end{proposition}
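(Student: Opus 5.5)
The plan is to compare the closed forms from Thm.~\ref{thm:basic_strategy_expected_time}'s companion results, namely Thm.~\ref{thm:backlog_under_basic_strategy} and Thm.~\ref{thm:backlog_under_cluster_strategy}, directly. Write $a := 1-p_R/M$, $b := 1-1/M$, $c := 1-p_P/K$ and $d := 1-p_P/K-(1-p_P)/M$. Then $\EX[B^R_t] = M(a^t-b^t)$ and $\EX[B^P_t] = M(c^t-d^t)$.

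First I record the elementary orderings. The hypothesis $p_P/K > p_R/M$ gives $c < a$. Since $p_R \le 1$, we have $b \le a$; if $p_R=1$, the second term in $t_0$ is infinite and the claim is vacuous, so we may take $b<a$. Finally $d \ge 0$, because $p_P/K + (1-p_P)/M \le p_P + (1-p_P) = 1$ when $K,M\ge 1$. All of $a,b,c,d$ lie in $[0,1]$, and $a,c>0$, so the logarithms in $t_0$ are of ratios greater than $1$ and are positive.

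Next I translate $t>t_0$ into two inequalities. From $t > \log 2/\log(a/c)$ we get $(a/c)^t > 2$, that is, $c^t < a^t/2$. From $t > \log 2/\log(a/b)$ we get $(a/b)^t > 2$, that is, $b^t < a^t/2$.

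The conclusion then follows from a short chain:
\begin{align*}
\EX[B^R_t] = M(a^t-b^t) > M\,\tfrac{a^t}{2} > M c^t \ge M(c^t-d^t) = \EX[B^P_t].
\end{align*}
The first inequality uses the bound on $b^t$. The second uses the bound on $c^t$. The third uses $d^t\ge 0$.

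I expect no real obstacle here. The only points needing care are checking $d\ge 0$ and handling the degenerate case $p_R=1$, where $a=b$ and the reactive backlog is identically zero. In that case $t_0=\infty$ is interpreted as making the statement vacuous.
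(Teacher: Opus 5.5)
Your proof is correct and follows the paper's own argument. Both reduce $t>t_0$ to $(1-p_P/K)^t \le \tfrac12(1-p_R/M)^t$ and $(1-1/M)^t \le \tfrac12(1-p_R/M)^t$, then chain $\EX[B^P_t] \le M(1-p_P/K)^t \le \tfrac{M}{2}(1-p_R/M)^t \le \EX[B^R_t]$ after dropping the subtracted term. You also fill in two points the paper leaves implicit: that $1-p_P/K-(1-p_P)/M \ge 0$, which is what justifies dropping that term, and that the case $p_R=1$ makes $t_0=\infty$, so the claim is vacuous there.
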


The backlog of any flywheel policy is composed of the gap between how quickly new failure scenarios are discovered (e.g., $(1-1/M)$ for reactive policies) and how quickly failures are resolved (e.g., $(1-p_R/M)$ for reactive policies). 
Early in deployment, discovery outpaces flywheel, so the gap grows and backlog increases. Later, as more scenarios are discovered and the model usage is better understood, the flywheel dominates. Proposition \ref{propn:backlog_dynamics} formalizes when the proactive regime overtakes the reactive regime after a finite crossover time.

The trend in backlog accumulation is shown in ~\cref{fig:backlog} and illustrates how the backlog grows early and only decays after discovery saturates. In a reactive regime, backlog therefore persists for a long prefix of deployment time. Note that backlog is not just a list. Backlog creates repeated human obligations. Each unresolved known error must be tracked, prioritized, reassessed across releases, and revalidated when touched by retraining. This is a queueing problem with coordination overhead layered on top, which makes the process more complicated in practice.

From Thm.~\ref{thm:backlog_under_basic_strategy} and~\ref{thm:backlog_under_cluster_strategy} and displayed in ~\cref{fig:backlog}, we observe that in the best case---where $K=1$ implies all scenarios are characterized by a single factor---the proactive paradigm exhibits substantially smaller maximum backlog and resolves backlog significantly faster than the reactive paradigm.
In the worst case---where $K=M$ implies a poorly designed test space, in which each factor can only be represented by a single data point---proactive improvement degenerates toward reactive behavior. Even in this extreme setting, the proactive backlog remains no worse than that of reactive.

\begin{figure*}[t]
    \centering
    \centering
    \includegraphics[width=0.8\linewidth]{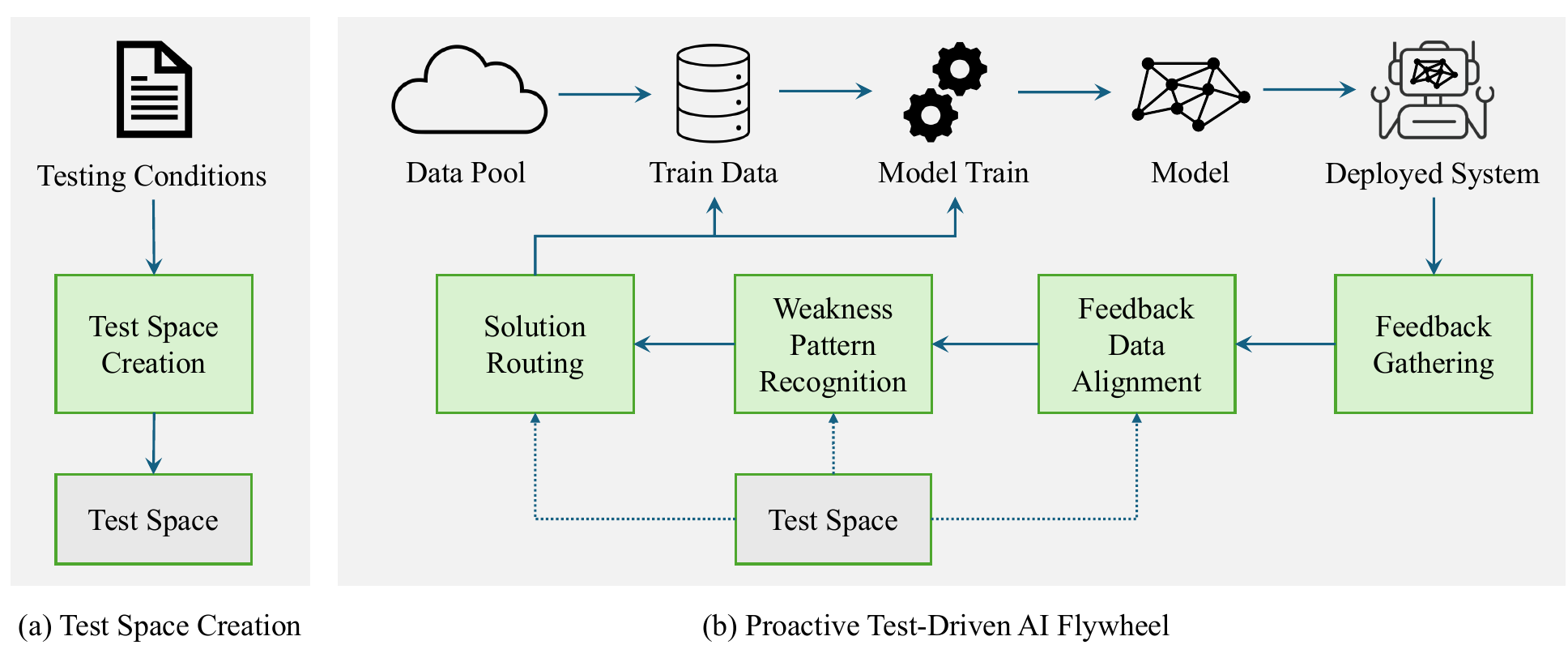}
    \caption{The paradigm for building a proactive test-driven flywheel. (a) We first create a test space from the task testing conditions, without using any real data. (b) The test space enables a proactive test-driven flywheel. During the loop, gathered feedback is aligned to the test space. Within the test space, weakness patterns associated with the task testing conditions are recognized and later resolved. 
    }
    \label{fig:sec4}
\end{figure*}

\section{How to Build a Proactive Test-Driven Flywheel Using the Test Space}

Simply reacting to observed errors can lead to costly flywheel iterations and backlogs. 
In contrast, proactively addressing errors can reduce these bottlenecks, even if proactive flywheel is per-instance more difficult than the reactive approach. 
We now discuss how to design a proactive approach using test-driven principles, centered on coverage over the test space. Within this approach, we propose several open research challenges.

\subsection{Test Space Overview}


Proactively improving a system requires mapping observed errors to their underlying usage contexts and designing strategies to improve performance across those underlying domains. For example in AD, user disengagement feedback may show an inability to stop for a firetruck with flashing lights and parked across several highway lanes at night. 
We can decompose such an observed error into basic factors, such as ``yield for public service cars'', ``caution in the face of flashing lights'',  or ``driving in low visibility''. Targeting the underlying factors can correspondingly improve the observed errors and mitigate potential future ones.

We define the space of underlying factors governing a task as a \textbf{test space}, i.e., the space of specific testing conditions where the system needs to excel. 
Given the test space as a primitive, our goal is to map observed errors to mark locations in the test space and build coverage over the important areas of this space to improve the broader task conditions these locations represent. 
However, developing a test space-driven flywheel requires answering two fundamental questions: 
\vspace{-3mm}
\begin{enumerate}
    \item How do we create a test space from known testing conditions of the target application?
\vspace{-2mm}
    \item 
    How do we align gathered feedback with the test space to identify uncovered or weakly covered factors driving system evaluation?
\end{enumerate}
\vspace{-3mm}

\cref{fig:sec4} summarizes a coverage-oriented proactive flywheel.  In the following subsections, we discuss how to address these two fundamental questions as well as other open problems that need to be addressed for a proactive flywheel.

\subsection{Creating a Test Space}

Rather than emerging from actual data, the test space is envisioned as a structured representation of testing requirements and conditions, as shown in ~\cref{fig:sec4}(a). By making the requirements explicit, the test space serves as an anchor for understanding system feedback and provides a reference frame where future feedback can be inferred.

To achieve the test space, it is necessary to first extract concrete factors from existing testing conditions. As decompositions of high-level requirements, these intermediate factors provide a concrete framework for analyzing system feedback.
In many physical AI applications, such as AD~\cite{nhtsa}, the testing conditions are specified in natural language, design documents, or safety requirements. As the system interacts with the real-world, obtaining all possible testing scenarios for factor analysis is costly. Instead, we argue that these written documents provide a scalable, expressive interface for articulating and reasoning through testing conditions and their associated factors. 

Fortunately, powerful language models can be helpful to surface and organize the factors implicit in testing requirements. 
For example, prior natural language processing work~\cite{niklaus-etal-2018-survey} suggests that task-relevant knowledge can be retrieved from textual descriptions; with their broad knowledge, large language models (LLMs)~\cite{naveed2025comprehensive} also serve as similarly powerful tools. In practical settings, testing conditions may be accompanied by a few visual examples. In such cases, perception models and vision language models (VLMs) can be leveraged to extract additional knowledge from visual inputs.


\vspace{-1.5mm}
\begin{tcolorbox}[width=0.48\textwidth, colback=white, boxrule=1pt, sharp corners,
    boxsep=1pt, 
    left=1pt, 
    right=1pt, 
    top=1pt, 
    bottom=1pt]
   \textbf{Open Problem 1.} 
    How can we discover all the concrete factors implicit in the task testing conditions?
\end{tcolorbox}
\vspace{-1.5mm}

Once we identify the set of factors required for successfully achieving the task, we must find a way to represent the space to hold all the factors. We argue that a mathematical and computational representation of the test space is essential for applying ML to analyze these factors and their relationships.

Several prior works represent the space through an embedding space that serves as a proxy abstraction. A variety of these representations includes tokens, network layers from different architectures, or more traditional ML features~\cite{bengio2013representation}. Such embeddings can be learned by different methods, such as supervised and contrastive learning~\cite{chen2020simple}. Alternatively, graph-based abstractions capture factor relationships by modeling dependencies and interactions, complementing embeddings with exposed relational patterns~\cite{hamilton2020graph}. Importantly, concepts from train space constructed from real data, such as data coverage, diversity, and similarity, can naturally extend to the test space~\cite{feldman2019core}. Different representation choices---discrete, continuous, or hybrid---are all compatible and remain open research questions.

\vspace{-1.5mm}
\begin{tcolorbox}[width=0.48\textwidth, colback=white, boxrule=1pt, sharp corners,
    boxsep=1pt, 
    left=1pt, 
    right=1pt, 
    top=1pt, 
    bottom=1pt]
    \textbf{Open Problem 2.} 
    How can we mathematically represent the contextual factors and their relationships that comprise the test space?
\end{tcolorbox}
\vspace{-1.5mm}
Overall, we advocate for the test space as a conceptual layer that decomposes high-level testing conditions into essential factors and as a space where gathered feedback can be mapped to testing requirements.

\subsection{Developing a Proactive, Test-Driven AI Flywheel}

We now explore the components of a proactive AI flywheel that uses the created test space, as shown in ~\cref{fig:sec4}(b).

\textbf{Feedback Gathering.}
Proactive improvement requires access to deployed system feedback, yet feedback gathering alone does not confer proactivity. As seen in ~\cref{fig:sec4}(b), feedback is gathered during system execution in real-world conditions and may take different forms, including performance metrics, safety violations, user interactions, or incident occurrence. In many systems, diverse feedback is gathered and filtered in an ad hoc or manual manner. Because feedback can be sparse, such methods limit the gathering scale and timelines~\cite{sculley2015hidden,waymo2020safety}. We argue that developing methods for automatically identifying and prioritizing informative feedback is a promising research direction. This would reduce human intervention and enable an efficient, automatic proactive flywheel. 
d

\vspace{-1.5mm}
\begin{tcolorbox}[width=0.48\textwidth, colback=white, boxrule=1pt, sharp corners,
    boxsep=1pt, 
    left=1pt, 
    right=1pt, 
    top=1pt, 
    bottom=1pt]
    \textbf{Open Problem 3.} 
    How can we automatically identify and prioritize relevant feedback signals for model updating?
\end{tcolorbox}
\vspace{-1mm}

\textbf{Feedback Data Alignment.}
Raw feedback signals are not directly interpretable by the system and thus not actionable for proactive improvement. If feedback is not interpreted in the context of the test space, they remain isolated observations, which primarily support a reactive flywheel. To connect feedback to testing expectations, we must convert feedback into a representation that is compatible with its essential factors and thus the test space. To simplify the problem and, more importantly, to ensure feedback falls into the same latent space, we can break down feedback alignment into two steps. First, we find a method to extract all core factors from a feedback point. Next, these extracted factors are mapped to the same representation used for test space creation, enabling direct alignment with the test space.

In the simplified setting, the main question is what method to use for inferring and extrapolating the factors implicit in a feedback point. 
For vision data, works on visual explanations and attention mechanisms, such as ~\cite{selvaraju2017grad, vaswani2017attention}, offer methods to identify important regions in an image or video related to specific factors. 
Beyond vision, extensive studies on language causality~\cite{pearl2018book} and counterfactual reasoning~\cite{kaushik2019learning} also offer methods using specific words and structures to infer an event, action, or relationships. More broadly, recent advances in multimodal large language models (MLLMs)~\cite{liu2023visual} and agentic AI~\cite{yao2022react} highlight increasingly strong multimodal reasoning capabilities, useful for inferring and reasoning through the desired factors.


\vspace{-1.5mm}
\begin{tcolorbox}[width=0.48\textwidth, colback=white, boxrule=1pt, sharp corners,
    boxsep=1pt, 
    left=1pt, 
    right=1pt, 
    top=1pt, 
    bottom=1pt]
    \textbf{Open Problem 4.} 
    For a given data point, how can we understand all its encompassed factors, which are instantiated within the test space?
\end{tcolorbox}
\vspace{-1.5mm}

\textbf{Weakness Pattern Recognition.}
Computationally formulating the test space and feedback alignment allows us to identify underlying system weakness patterns from a set of feedback points and proactively address the weaknesses. We can leverage widely studied pattern recognition methods in ML and deep learning. One type of method uses clustering-based approaches to discover structure without explicit supervision. Such methods include k-means, hierarchical clustering, density-based clustering, and distribution-based clustering~\cite{jain2010data}. Other approaches rely on learning-based methods, in which patterns are captured through trained models (e.g. via a support vector machine~\cite{vapnik1999overview} or a neural network~\cite{lecun2015deep}).

\vspace{-1.5mm}
\begin{tcolorbox}[width=0.48\textwidth, colback=white, boxrule=1pt, sharp corners,
    boxsep=1pt, 
    left=1pt, 
    right=1pt, 
    top=1pt, 
    bottom=1pt]
\textbf{Open Problem 5.} 
    How can we discover system weakness patterns in the test space with mapped data as guidance?
\end{tcolorbox}
\vspace{-1.5mm}

Because weakness patterns are found within the test space---which encodes all testing requirements for a successful task---the identified patterns capture not only the observed scenarios exposed by the collected feedback, but also potential unseen scenarios sharing commonalities. In contrast, a reactive system that addresses only a list of feedback may summarize and address them in a batch, yet lacks the test space as context to generalize beyond them. Consequently, the number of unseen errors that can be anticipated and mitigated is greatly reduced. In the worst case, a poorly designed test space fails to provide additional generalization guidance, and the resulting proactive flywheel's behavior degenerates to that of a reactive system. Therefore, we advocate addressing these patterns generically rather than patching for collected feedback specifically. By doing so, the system can proactively mitigate future feedback and achieve generalization more efficiently.

\textbf{Solution Routing.}
Lastly, we need to determine how to actually address the recognized patterns, which can be resolved with model-centric and data-centric approaches. From given data, model-centric methods improve model learning by iterating on model architecture, hyperparameters, or training algorithms~\cite{goodfellow2016deep,vaswani2017attention}. In contrast, data-centric methods improve performance by finding the right data and improving data quality~\cite{neurips2021_data_centric_workshop,jakubik2024data}.

While both paradigms have been extensively studied, they are typically explored in isolation. Most works assume the source of poor performance is known, whether model or data-driven, and propose solutions under this assumption.
Diagnosing whether a model weakness is due to a model learning limitation or a data deficiency is understudied.

\vspace{-1mm}
\begin{tcolorbox}[width=0.48\textwidth, colback=white, boxrule=1pt, sharp corners,
    boxsep=1pt, 
    left=1pt, 
    right=1pt, 
    top=1pt, 
    bottom=1pt]
    \textbf{Open Problem 6.} 
    How can we decide what strategy to use to robustly mitigate weakness patterns (e.g., a model- or data-centric manner)?
\end{tcolorbox}
\vspace{-1.5mm}

Conceptually, one natural diagnostic method is to map training data to the test space in a similar manner during feedback alignment. 
By examining if weakness patterns correspond to sparse training data coverage or densely populated but poorly learned regions in the test space, we can determine if the solution may require more data, better model learning, or both.
However, developing a principled diagnosis remains an open direction for proactive AI systems.


\section{Alternative Views}

\textbf{Data-rich and bounded tasks.} Strong train and test datasets may already be sufficient for bounded tasks such as closed-set detection. This view is attractive because mature dataset-curation workflows can directly improve coverage, diversity, and benchmark performance without requiring a separate proactive flywheel. However, this approach still depends on knowing, sampling, or constructing the relevant task conditions; when important conditions are rare, shifting, or absent from existing data, dataset coverage alone can leave the same long-tail gap.

\textbf{Reactive correction as incident response.} Reactive correction is often a practical engineering choice, because it is fast to implement, has low local engineering cost, fits incident-response workflows, and produces visible short-run progress. These benefits are important for urgent failure events where the AI product needs to be patched with a targeted fix immediately. However, the advantages of reactive correction are local. For instance, if failures are long-tailed or share deeper structure, repeatedly patching observed incidents can still lead to the scaling and backlog costs in~\cref{sec:sec3}.

\textbf{When reactive correction may be sufficient.} Our theory also identifies regimes where reactive correction may be viable: the error space may be limited and enumerable, failures may not group into meaningful shared factors ($K \approx M$), or proactive correction may be orders of magnitude harder than reactive correction (very large $p_R/p_P$). Vague and shifting objectives provide another difficult case. For example, generative AI applications that target ``virality", such as meme generation, depend on changing social trends, user demographics, and recommendation algorithms~\cite{rathje2025psychology,berger2012makes}. In such settings, user feedback may be the most reliable signal, but if the feedback space remains large, the reactive approach still faces the same long-tail cost pressure.

\section{Conclusion}
As AI systems integrate into society, they must achieve the generalizability required to navigate diverse, open-ended use cases. In this position paper, we have argued that transitioning from reactive patching to a proactive, test-driven flywheel is essential for generalizability. Key to this transition is 
coverage over a ``test space" that contextualizes errors within system objectives. Furthermore, we have proved that this proactive flywheel overcomes current reactive limitations by preempting future errors and reducing flywheel iterations. To realize the full potential of a proactive flywheel, we urge AI researchers to help solve the various open problems we have highlighted: 
test space creation, representation, and coverage, 
feedback identification and prioritization, feedback alignment with test space, weakness patterns discovery, and weakness pattern resolution. Through interdisciplinary collaboration, we can advance the development of truly generalizable AI.

\clearpage

\appendix

\bibliographystyle{plainnat} 
\bibliography{references} 

\clearpage

\newpage
\appendix
\onecolumn

\section{Related Work on AI Flywheel}
\label{sec:app_related}

\textbf{Hard Negative Mining.} Many ML systems improve by actively mining "hard" examples that the model gets wrong (false negatives/positives) and retraining on them. Online hard example mining automatically selects difficult cases, yielding faster convergence and higher accuracy \citep{shrivastava2016training, lin2017focal, cui2019class, kalantidis2020hard}. 
For object detection in AVs, mining rare object examples (by rarity in feature space) significantly boosts detection of infrequent scenarios \citep{jiang2022improving}. 
Public industry descriptions of autonomous-driving data engines similarly emphasize mining reported or discovered failure cases, retrieving related data from a fleet-scale data source, and adding those slices back into the training set with human or auto-labels \citep{TeslaAutonomyDay2019,tesla_data_engine}. 
This practice is attractive because each mined slice of data produces an immediate, legible improvement in the next model release.

\textbf{Red-Teaming and Adversarial Testing.} Before and after deployment, teams conduct structured "red teaming" to probe models for vulnerabilities or harmful behaviors \citep{ganguli2022red, bullwinkel2025lessons}. In NLP, this means generating adversarial inputs that induce toxic or unsafe outputs, and then mitigating them \citep{perez2022red, zou2023universal, rottger2024xstest}. 
Organizations invest heavily in adversarial testing; for example, OpenAI’s GPT-4 deployment involved over 100 external experts attacking the model to identify issues, followed by model fine-tuning or safety filters as mitigations \citep{hurst2024gpt}. 
These public accounts illustrate an observed-failure-driven maintenance mode: teams probe or monitor for failures, add mitigations, and update evaluations after misses. OpenAI's 2025 GPT-4o sycophancy post is a recent example: after deployment signals revealed a behavioral failure, the update was rolled back and the process was revised to add sycophancy-specific evaluations (\url{https://openai.com/index/expanding-on-sycophancy/}).

\textbf{Behavioral Testing.} This is an increasingly popular approach to test set construction that reframes model reliability as a bug-fixing problem \citep{ribeiro2020beyond, srivastava2023beyond}. The key approach involves defining a matrix of capabilities and test types to generate a series of test cases for NLP models. This can be done manually or automatically \citep{ribeiro2022adaptive}. Such practices assume that comprehensively testing different behavior slices will surface the model’s mistakes, which can then be addressed by adding training examples or adjusting the model. Test case generation has also been explored in reinforcement learning planners \citep{koren2018adaptive}.

\textbf{Continual Learning.} This framework assumes that ML tasks drift or expand as training data arrives over time \citep{parisi2019continual}. The objective is to train a model that preserves it's earlier competence while acclimating to the new settings. This is a fundamental problem as retraining the model repeatedly can risk "catastrophic forgetting", where tasks or settings that were previously learned become recurring failure scenarios for the model. As consumer and general-purpose ML technology grows popular, continual learning has become increasingly prevalent in complex settings including object detection \citep{shmelkov2017incremental, joseph2021towards, singh2021order}, robotics \citep{lesort2020continual, kagaya2025vireskill}, and LLMs \citep{shi2024continual}.

Taken together, hard-negative mining, active learning, and continual learning primarily address present observed errors or newly arrived data. Behavioral testing is closer to a proactive framing because it constructs behavior slices before every individual failure has been observed, although the usual remediation still updates the model after tests reveal mistakes.

\textbf{Incident Response and Patching.} Reactive flywheel is typically implemented as incident response. Systems log failures, route them into ticket queues, and trigger remediation actions to triage the failure, prioritize different failures, retrain the model to resolve the issues, and redeploy the ML model \citep{xin2021production}. Sophisticated ML pipelines in production allow the model to be continuously evaluated, and whenever performance dips or new failure patterns emerge (via user feedback, anomaly detection, etc.), a new training cycle is triggered to deploy a fix \citep{uber2017michelangelo, TeslaAutonomyDay2019}. 
This framework presumes that each known incident can be resolved individually: by pushing a model update or adding post-processing rules, the team eliminates that failure mode. This one-at-a-time servicing keeps the system on track in the short term (and satisfies stakeholders expecting quick fixes), albeit at the cost of growing complexity over time \citep{sculley2015hidden}.

\section{Proofs}
\label{sec:app_proofs}

\begin{proof}[Proof of Theorem \ref{thm:basic_strategy_expected_time}]
    Suppose that at a given time $t$, we have $r := M- |\set{U}_t|$ errors that have been addressed. Then, the probability that we can resolve an additional error at the $t+1$-th time instance is
    \begin{align*}
        \Pr( |\set{U}_{t+1} | = M-r-1 ~|~ |\set{U}_{t} | = M-r) &= \Pr(I_{t+1} \in \set{U}_t \text{~and the error is fixed} ~|~ |\set{U}_{t} | = M-r) \\
        &= \frac{M-r}{M} p_R
    \end{align*}
    Furthermore, let $W_r := \min\{ \delta \;|\; |\set{U}_t|=M-r , \set{U}_{t+\delta} = M-r-1 \}$ be the waiting time until the $r+1$-th error is resolved. Note that this waiting time follows a Geometric distribution based on the Bernoulli probability of resolving an additional error at each time, i.e, $W_r \sim \Geometricdist((M-r)p_R/M)$. Now, we rewrite $T^R$ in terms of a summation of waiting times
    \begin{align*}
        \EX[T^R] = \sum_{r=0}^M \EX[W_r] ~=~ \sum_{r=1}^M \frac{M}{M-r} \frac{1}{p_R} 
        ~=~ \frac{M}{p_R} \sum_{r=1}^M \frac{1}{M-r} 
        ~=~ \frac{M}{p_R} \sum_{s=1}^M \frac{1}{s}
    \end{align*}
\end{proof}

\begin{proof}[Proof of Theorem \ref{thm:cluster_strategy_expected_time}]
    Suppose that at a given time $t$, we have $r := M- |\set{U}_t|$ errors that have been addressed. Then, the probability that we can resolve an additional $M/K$ error at the $t+1$-th time instance is
    \begin{align*}
        \Pr( |\set{U}_{t+1} | = M-r- \frac{M}{K} ~|~ |\set{U}_{t} | = M-r) &= \Pr(I_{t+1} \in \set{U}_t \text{~and the cluster is fixed} ~|~ |\set{U}_{t} | = M-r) \\
        &= \frac{K-r}{M} p_P
    \end{align*}
    Furthermore, let $W_r := \min\{ \delta \;|\; |\set{U}_t|=M-r , \set{U}_{t+\delta} = M-r-M/K \}$ be the waiting time until the $r+1$-th error is resolved. Note that this waiting time follows a Geometric distribution based on the Bernoulli probability of resolving an additional error at each time, i.e, $W_r \sim \Geometricdist((K-r)p_P/M)$. Now, we rewrite $T^P$ in terms of a summation of waiting times
    \begin{align*}
        \EX[T^P] = \sum_{r=0}^M \EX[W_r] ~=~ \sum_{r=1}^K \frac{K}{K-r} \frac{1}{p_P} 
        ~=~ \frac{K}{p_P} \sum_{r=1}^K \frac{1}{K-r} 
        ~=~ \frac{K}{p_P} \sum_{s=1}^K \frac{1}{s}
    \end{align*}
\end{proof}

\begin{proof}[Proof of Corollary \ref{cor:when_cluster_is_shorter}]
    The proof follows from evaluating $T^P > T^R$ and re-arranging the terms.
\end{proof}

\begin{proof}[Proof of Theorem \ref{thm:backlog_under_basic_strategy}]
    For any time $s$, let $A_{i,s} := \{ I_s = i \}$ denote the event that the error $i$ was observed at time $s$. Furthermore, let $S_{i,s} := A_{i,s} \cup \{ \text{error $i$ was fixed} \}$ indicate the event of a successful resolution of the error. Note that $\Pr(A_{i,s}) = 1/M$ and $\Pr(S_{i,s}) = p_R/M$. Finally, define the events
    \begin{align*}
        E_{i,t} := \bigcup_{s=1}^t A_{i,s} \qquad \qquad F_{i,t} := \bigcap_{s=1}^t \overline{S}_{i,s}
    \end{align*}
    denote the events that the error was observed at some point up to time $t$ and that the error was not successfully resolved by time $t$, respectively. Finally, let $X_{i,t} := E_{i,t} \cap F_{i,t}$ denote the event that the error was observed but not fixed by time $t$ and is currently in backlog. Then,
    \begin{align*}
        \EX[B_T^R] &= \sum_{i=1}^M \EX[X_{i, t}] \\
        &= M \Pr(X_{1,t} = 1) \\
        &= M \Pr(E_{1,t} \cap F_{1,t}) \\
        &= M \Pr(F_{1,t}) - \Pr(\overline{E}_{1,t} \cap F_{1,t}) \\
        &= M \left( \Pr\left( \bigcap_{s=1}^t \overline{S}_{1,s} \right) - \Pr\left( \bigcap_{s=1}^t \overline{A}_{1,s} \cap \bigcap_{s=1}^t \overline{S}_{1,s} \right) \right) \\
        &= M \left( \Pr\left( \bigcap_{s=1}^t \overline{S}_{1,s} \right) - \Pr\left( \bigcap_{s=1}^t \overline{A}_{1,s}\right) \right) \\
        &= M \left( \left(1-\frac{p_R}{M}\right)^t - \left( 1 - \frac{1}{M}\right)^t  \right)
    \end{align*}
    Above, the first three equalities follow from the definition of $X_{i,t}$, and the fourth equality follows from the definitions of the intersection of two sets. The fifth equality substitutes the definitions of $E_{1,t}$ and $F_{1,t}$ and the sixth equality uses the fact that $S_{1,s} \subseteq A_{1,s}$ for any $s$. Finally, we substitute in the probabilities for $\overline{S}_{1,s}$ and $\overline{A}_{1,s}$.
\end{proof}

\begin{proof}[Proof of Theorem \ref{thm:backlog_under_cluster_strategy}]
    Let $B^C_{t,1}$ denote the backlog corresponding only to the $M/K$ errors in the first cluster. By symmetry of the clusters, $\EX[B_t^C] = K \EX[B^C_{t,1}]$. Consequently, we will bound $\EX[B^C_{t,1}]$.
    
    For each time instance $s \leq t$, let $O_s$ denote the event that the observed error does not belong to the cluster (with probability $1 - 1/K$), let $F_s$ denote the event that the observed error belongs to the cluster but we fail to correct it (with probability $(1-p_P)/K$), and let $S_s$ denote the success event that the observed error belongs to the cluster and we correct it (with probability $p_P/K$). Finally, let $N_O$, $N_F$, and $N_S$ denote the counts of these events after $t$ observations and note that $(N_O, N_F, N_S) \sim \Multinomial(t, 1-1/K, (1-p_P)/K, p_P/K)$.

    We first compute over the $t$ observed errors, the probability that cluster 1 was observed $n$ times and is still  uncorrected:
    \begin{align*}
        \Pr(N_0 = t - n, N_F = n, N_S = 0) = {t \choose n} \left( 1 - \frac{1}{K}\right)^{t-n} \left( \frac{1-p_P}{K}\right)^n.
    \end{align*}
    Conditioned on $n$ observed errors from cluster 1, let $Y_{j}$ be a binary variable indicating whether error $j \in \{1, 2, \cdots, M/K\}$ was observed and let $D_n = \sum_{j=1}^{M/K} \Ind\{Y_j=1\}$ be the count of the number of unique errors from cluster 1 that were observed. Then, conditioned on cluster 1 not being resolved, the expected backlog from the cluster is
    \begin{align*}
        \EX[D_n] &= \EX\left[\sum_{j=1}^{M/K} \Ind\{Y_j = 1\} \right] \\
        &= \sum_{j=1}^{M/K} \Pr(Y_j = 1) \\ 
        &= \frac{M}{K} \left( 1 - \Pr(Y_1 = 0) \right) \\
        &= \frac{M}{K} \left( 1 - \left( 1 - \frac{K}{M}\right)^n \right)
    \end{align*}
    Above, the first two equalities follow by definition, the third equality follows from the symmetry of all errors being equivalent, and the fourth equality follows from the fact that $\Pr(Y_1 = 0)$ follows a binomial distribution with success probability $1-K/M$ and $n$ successes.

    Finally, we are ready to bound the unconditional expected backlog from cluster 1:
    \begin{align*}
        \EX[B^C_{t,1}] &= \sum_{n=0}^t \EX[D_n] \Pr(N_0 = t - n, N_F = n, N_S = 0) \\
        &= \frac{M}{K} \sum_{n=0}^t \left(  1 - \left( 1 - \frac{K}{M}\right)^n \right) {t \choose n} \left( 1 - \frac{1}{K}\right)^{t-n} \left( \frac{1-p_P}{K}\right)^n \\
        &= \frac{M}{K} \left( \sum_{n=0}^t {t \choose n} \left( 1 - \frac{1}{K}\right)^{t-n} \left( \frac{1-p_P}{K}\right)^n - \sum_{n=0}^t {t \choose n} \left( 1 - \frac{1}{K}\right)^{t-n} \left( \frac{1-p_P}{K}\right)^n  \left( 1 - \frac{K}{M}\right)^n \right) \\
        &= \frac{M}{K} \left( \left( 1 - \frac{p_P}{K} \right)^t - \left( 1 - \frac{p_P}{K} - \frac{1 - p_P}{M}\right)^t \right)
    \end{align*}
    Above, the first two equalities follow from definition, the third equality expands the first term in the summation, and the fourth equality applies the Binomial Theorem to both sums to simplify them.

    Finally, we substitute $K \EX[B^C_{t,1}] = \EX[B^C_t]$ to complete the proof.
\end{proof}

\begin{proof}[Proof of Proposition \ref{propn:backlog_dynamics}]
    We first note two identities. First,
    \begin{align*}
        \frac{p_P}{K} > \frac{p_R}{M} ~~\Longleftrightarrow~~ 1 - \frac{p_R}{M} > 1 - \frac{p_P}{K}
        ~~\Longleftrightarrow~~ 1 > \left(\frac{1 - \frac{p_P}{K}}{1 - \frac{p_R}{M}}\right)^t
    \end{align*}
    which holds for any $t > 1$
    Let $t_1 := \frac{\log2}{\log\frac{1-p_R/M}{1-p_P/K}}$. Then for any $t > t_1$, we must have
    \begin{align*}
        \left(\frac{1 - \frac{p_P}{K}}{1 - \frac{p_R}{M}}\right)^t \leq \frac{1}{2}.
    \end{align*}

    Second, 
    \begin{align*}
        1 > p_R ~~\Longleftrightarrow~~ 1 - \frac{p_R}{M} > 1 - \frac{1}{M} 
        ~~\Longleftrightarrow~~ 1 > \left( \frac{1 - \frac{1}{M}}{1 - \frac{p_R}{M}} \right)^t
    \end{align*}
    which holds for any $t > 1$. Let $t_2 := \frac{\log2}{\log\frac{1-p_R/M}{1-1/M}}$. Then, for any $t > t_2$, we must have
    \begin{align*}
        \left(\frac{1 - \frac{1}{M}}{1 - \frac{p_R}{M}}\right)^t \leq \frac{1}{2} ~~ \Longleftrightarrow ~~ \frac{1}{2} = 1 - \frac{1}{2} \leq 1 - \left(\frac{1 - \frac{1}{M}}{1 - \frac{p_R}{M}}\right)^t 
    \end{align*}

    We are now ready to prove the inequality. Consider $t > \max(t_1, t_2)$. Then,
    \begin{align*}
        \EX[ B^C_t ] &= M \left( \left(1-\frac{p_P}{K}\right)^t - \left( 1 - \frac{p_P}{K} - \frac{1-p_P}{M}\right)^t \right) \\
        &\leq M  \left(1-\frac{p_P}{K}\right)^t \\
        &= M  \left(1-\frac{p_R}{M}\right)^t \left(\frac{1 - \frac{p_P}{K}}{1 - \frac{p_R}{M}}\right)^t \\
        &\leq \frac{M}{2} \left(1-\frac{p_R}{M}\right)^t \\
        &\leq M \left(1-\frac{p_R}{M}\right)^t \left( 1 - \left(\frac{1 - \frac{1}{M}}{1 - \frac{p_R}{M}}\right)^t  \right) \\
        &= M \left( \left(1-\frac{p_R}{M}\right)^t - \left(1 - \frac{1}{M}\right)^t \right) = \EX[B^B_t]
    \end{align*}
    Above, the first inequality follows from removing the negative term, the second inequality follows from the first identity that we noted earlier, and the third inequality follows from the second identity that we noted earlier.
\end{proof}

\end{document}